\definecolor{lilac}{rgb}{0.8, 0.6, 0.8}
\theoremstyle{plain}
\newtheorem{theorem}{Theorem}[section]
\newtheorem{proposition}[theorem]{Proposition}
\theoremstyle{definition}
\theoremstyle{remark}
\icmltitlerunning{Relaxed Equivariant Graph Neural Networks}
\begin{document}

\twocolumn[
\icmltitle{Relaxed Equivariant Graph Neural Networks}



\icmlsetsymbol{equal}{*}

\begin{icmlauthorlist}
\icmlauthor{Elyssa Hofgard}{1}
\icmlauthor{Rui Wang}{1}
\icmlauthor{Robin Walters}{2}
\icmlauthor{Tess Smidt}{1}
\end{icmlauthorlist}

\icmlaffiliation{1}{Department of Electrical Engineering and Computer Science, MIT, USA}
\icmlaffiliation{2}{College of Computing Sciences, Northeastern University, USA}

\icmlcorrespondingauthor{Elyssa Hofgard}{ehofgard@mit.edu}

\icmlkeywords{Equivariant machine learning, symmetry breaking, relaxed convolution}
\vskip 0.2in
]



\printAffiliationsAndNotice{} 

\begin{abstract}
3D Euclidean symmetry equivariant neural networks have demonstrated notable success in modeling complex physical systems. We introduce a framework for relaxed $E(3)$ graph equivariant neural networks that can learn and represent symmetry breaking within continuous groups. Building on the existing \texttt{e3nn} framework, we propose the use of relaxed weights to allow for controlled symmetry breaking. We show empirically that these relaxed weights learn the correct amount of symmetry breaking.
\end{abstract}

\section{Introduction}
3D Euclidean symmetry equivariant neural networks or $E(3)$NNs \citep{cohen2016steerable, thomas2018tensor, Kondor2018-nbody, weiler20183d} have had considerable success in modeling complex physical data \textemdash from learning ab initio molecular dynamics \citep{Batzner2022-sr} to predicting quantum mechanically accurate properties of molecules and crystals \citep{Rackers2023-sb, fang2024phonon}. Through integrating physical symmetries into the model architecture, equivariant neural networks can achieve superior generalization and data efficiency \citep{liao2023equiformerv2, Batzner2022-sr, Frey2023-gg, Rackers2023-sb, Owen2023-xw}.

However, in many physical settings, there may be symmetry breaking or approximate symmetries. For instance, in crystal phase transitions, spontaneous symmetry breaking marks the shift from a high-symmetry state (like a liquid) to a low-symmetry, ordered crystalline structure, fundamentally altering the material's properties. Symmetry breaking can also occur when external forces act on the system. For example, in fluid dynamics, external forces or temperature gradients can break the Euclidean symmetry of a uniform fluid layer \cite{tagawa2023symmetry}. It is therefore desirable to develop models capable of parametrizing sources of symmetry breaking for applications across diverse physical systems. In this work, we refer to these sources of asymmetry as symmetry-breaking factors. It is noteworthy that these factors are analogous to order parameters in Landau theory, which describe the system parameters leading to phase transitions \cite{landau1936orderparam}. Previous works \citep{wang2022approximately, wang2024symmbreak} have developed relaxed group convolutions for discrete groups to learn and parametrize symmetry breaking. We develop a relaxed equivariant graph convolution neural network architecture, generalizing to the continuous group $E(3)$ using operations present in \texttt{e3nn} \cite{e3nnsoftware, geiger2022e3nn}. 


\section{Background}
We begin with a brief overview of the group representation theory and notation relevant to convolutions in $E(3)$ equivariant neural networks. Euclidean symmmetry in 3D includes 3D translations, rotations, and inversion. Translation equivariance is typically achieved by using convolutions. Convolutional filters are thus constrained to be equivariant to $O(3) = SO(3) \times \mathbb{Z}_2$, the group of 3D rotations and inversion. In the following sections, we thus discuss $O(3)$.

\textbf{Group representations.} Group representations provide a concrete way to handle abstract symmetry groups. A representation of a group $G$ on vector space $V$ of dimension $d$ is a function $D : G \to \text{GL}(V)$. $\text{GL}(V)$ is the group of invertible $d \times d$ matrices. A representation $D$ has the following properties $\forall g,h \in G$ and where $e \in G$ is the identity element: (i) $D(gh) = D(g)D(h)$ and (ii) $D(g)D(g^{-1}) = D(g^{-1})D(g) = D(e) =  \mathbf{I}_d$.

\textbf{Irreducible representations.} An irreducible representation (irrep) is a representation that does not contain a smaller representation\textemdash there is no nontrivial projector $P \in \mathbb{R}^{q \times d}$ such that $g \to PD(g)P^T$ is also a representation \citep{Dresselhaus2008}. In $E(3)$NNs, data is typed by how it transforms under given irreps. 

The irreps of $O(3)$ are the product of the irreps of $SO(3)$ and $\mathbb{Z}_2$. For $SO(3)$, the irreps are indexed by a positive integer angular frequency $l = 0,1,2,\dots$. For $\mathbb{Z}_2$, there are two irreps: the even irrep indexed by parity $p=1$ and the odd irrep indexed by parity $p=-1$. The irreps of $O(3)$ thus carry both $l$ and $p$ indices and can be associated with matrices $D^{l,p}$. Each irrep has an associated matrix $D^{l,p}$ which operates on the vector space $V^l$ of dimension $2l+1$. In this work, we will use tuples $(l, p)$ and symbols $l_p$ to indicate specific irreps. The former is useful when dealing with tensor indices and the later is easier when explicitly listing direct sums or concatentations of irreps.

\textbf{Spherical harmonics.} Convolutions that are equivariant to $O(3)$ are built from spherical harmonics. The spherical harmonics are a family of functions $Y^l$ from the unit sphere to the vector space of the irrep $D^{l,p}$, $Y^l: S^2 \rightarrow V^{l}$, and form a basis for all equivariant polynomials on the sphere \cite{geiger2022e3nn}. The $2l + 1$ spherical harmonics for a given $l$ are denoted by $Y^l_m$ where $m = -l, \dots, 0, \dots l$. The spherical harmonics have specific parity, with even $l$ spherical harmonics being even under inversion ($p=1$) and odd $l$ spherical harmonics being odd under inversion ($p=-1)$.
The spherical harmonics are equivariant under $O(3)$ and thus transform via $D^{l, p}$:
\begin{equation} \label{eq:sphharm}
    Y^l_m(R(g) \vec{r}) = D^{l, p}(g)_{mk} Y^l_k(\vec{r})
\end{equation}
where $\vec{r}$ is a 3D vector and $R$ is the representation of $O(3)$ on 3D vectors (a rotation matrix).

\textbf{$\mathbf{E(3)}$ equivariant graph convolutions.} The primary distinction between traditional convolutions and equivariant convolutions lies in the operations between filters and inputs. Equivariant convolutions must ensure that any operation respects underlying symmetries and faithfully treats higher-order geometric objects (beyond scalars). $E(3)$NNs thus use tensor product decompositions.

Tensor product decompositions multiply two direct sums of irreps (input and filter) and provides the change of basis back into a new direct sum of irreps (output) in way that preserve the multiplication rules of the group. The product of two irreps can produce multiple different irreps, referred to as different ``paths'' of interaction that are independently equivariant. At a high level, these tensor product decompositions are encoded by a three index tensor built from Clebsch-Gordan coefficients. For two inputs $x$ and $y$, $x \otimes y = \sum_{\alpha \beta} C_{\alpha \beta \gamma} x_{\alpha} y_{\beta} = z$ such that we are guaranteed the property that $D^{Z}(g)z = D^X(g)x \otimes D^Y(g) y$. See \citet{geiger2022e3nn} for more detail.

While in \citet{thomas2018tensor, Kondor2018-nbody, weiler20183d}, the convolutional filter is expressed in terms of a radial basis multiplying the spherical harmonics, in more recent networks such as those implemented in \texttt{e3nn}, radial functions are used to provide weights to the tensor product decomposition, as each independent ``path'' can be weighted by a scalar and still preserve equivariance \cite{geiger2022e3nn}. We adopt the following notation for equivariant convolutions.  
\begin{align} 
\tilde{Y}(\hat{r}_{ab}) &= \bigoplus_{0=l}^{l_{\text{max}}} Y_l(\hat{r}_{ab})
\\
    f_a^{\prime} &= \frac{1}{\sqrt{N}} \sum_{b \in \delta(a)} f_b \otimes_{W(||r_{ab}||)} \tilde{Y}(\hat{r}_{ab}) \label{eq:conv}
\end{align}

$f_b, f_a^{\prime}$ are the input and output node features. $N$ is the average degree of the nodes, $\hat{r}_{ab}$ is the relative unit vector from position $a$ to $b$, $\delta(a)$ is the set of neighbor nodes of node $a$, and $\tilde{Y}$ is the entire set of spherical harmonics used in the filter from $l=0$ to some $l_{\text{max}}$. The spherical harmonic projection $\tilde{Y}(\hat{r}_{ab})$ refers to evaluating the spherical harmonics at $\hat{r}_{ab}$ at each $Y^l, 0 \leq l \leq l_{\text{max}}$ and concatenating the results.
The learnable radial function that parameterizes the tensor product $W(||r_{ab}||)$ is a multi-layer perceptron (MLP) acting on a set of $b$ radial basis functions $B: \mathbb{R} \rightarrow \mathbb{R}^b$ evaluated at the magnitude $||r_{ab}||$.
$W$ has the form:
\begin{align}
W := \text{MLP}(B(||r_{ab}||)_{((c, l, p)_i, (c, l, p)_f, (c, l, p)_o)}
\end{align}
where $c$ indicate channel indices, as there can be multiple copies of a given irrep in the input or output. $(l, p)$ with indices $i,f,o$ specify the irreps in the input, filter, and output respectively. Tallowed combinations of $l_i, l_f$, and $l_o$ are determined by the group multiplication rules (Clebsch-Gordan Coefficients). Common choices of radial basis functions include evenly space, partially overlapping Gaussians or Bessel functions. In our work, we expand upon this $E(3)$ equivariant graph convolution.

\section{Related Work}

\citet{wang2022approximately} proposed relaxed group convolution for discrete groups. \citet{wang2024symmbreak} extended relaxed group convolutions to show that they can allow the model to maintain the highest level of equivariance consistent with the data and discover symmetry-breaking factors. We propose a formulation of relaxed convolutions for continuous rather than discrete groups. \citet{smidt2021finding} demonstrated that the gradients to ENNs can be used to learn the appropriate symmetry breaking factors through providing an additional trainable input. Our approach is complimentary in that we allow the weights to break symmetry rather than training an additional input to the network. \citet{mcneela2023almost} proposes a Lie algebra convolution to relax the equivariance bias but does not show how to recover symmetry breaking factors.


\section{Methodology}
\subsection{Definition of relaxed $E(3)$NN}
Here, we present the definition of our \textbf{relaxed} $E(3)$ equivariant graph neural network or relaxed $E(3)$NN. The relaxed weights $\tilde{\theta}$ are the direct sum or concatenation of scalar and non-scalar irreps (which if non-zero break $O(3)$ symmetry). For example, if we wanted to include all irreps up to some angular frequency cutoff $l_{\text{relaxed}}$, $\tilde{\theta}$ would be
\begin{align}
\tilde{\theta} = \bigoplus_{
\substack{0\le l \le l_{\text{relaxed}} \\ p \in \{1, -1\}}} \theta^{l, p}
\end{align}
where $l$ and $p$ indicate the angular frequency and parity of the irrep of $O(3)$, respectively. As irreps are $2l+1$ dimensional, the overall dimensionality of $\tilde{\theta}$ depend both on the number and types of non-scalar irreps it is built from.
We interact our learned relaxed weights with the spherical harmonic projection of relative distance vectors via a tensor product $\tilde{\theta} \otimes \tilde{Y}(\hat{r}_{ab})$. The full equation for relaxed convolution is then

\begin{align} \label{eq:relax_conv}
    f_a^{\prime} = \frac{1}{\sqrt{z}} \sum_{b \in \delta(a)} f_b \otimes_{W(||r_{ab}||)} (\tilde{\theta} \otimes \tilde{Y}(\hat{r}_{ab}))
\end{align}
The weights that parameterize the tensor product of the filter and input $W(||r_{ab}||)$ now depend on the direct sum of irreps formed by the tensor product of $\tilde{\theta} \otimes \tilde{Y}(\hat{r}_{ab})$. Note that the relaxed weights are shared across input and output channels, thus only introducing a few more parameters for each layer. 

We initialize the relaxed weights $\tilde{\theta}$ such that the initial model remains equivariant to ensure the model learns the minimal amount of symmetry breaking.
\begin{proposition}
The relaxed $E(3)$NN is equivariant if and only if $\tilde{\theta}$ transforms as a scalar, i.e. only the term corresponding to the scalar irrep is non-zero.
\end{proposition}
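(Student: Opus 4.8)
The plan is to reduce equivariance of the whole relaxed convolution in \eqref{eq:relax_conv} to equivariance of the single new object, the modified filter $\Phi(\hat r_{ab}) := \tilde\theta \otimes \tilde Y(\hat r_{ab})$, and then to characterize when that filter is equivariant. Because the radial weights $W(\lVert r_{ab}\rVert)$ depend only on the $O(3)$-invariant scalar $\lVert r_{ab}\rVert$, and because \eqref{eq:conv} is already equivariant whenever its filter transforms under a genuine representation of $O(3)$, the relaxed convolution is equivariant exactly when $\hat r \mapsto \Phi(\hat r)$ is an equivariant map. The \emph{if} direction then follows at once: if only the scalar component $\theta^{0,1}$ is nonzero, then $\tilde\theta \otimes \tilde Y(\hat r) = \theta^{0,1}\,\tilde Y(\hat r)$ is a mere rescaling of the equivariant spherical-harmonic filter, so \eqref{eq:relax_conv} reduces to \eqref{eq:conv} and is equivariant.

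For the \emph{only if} direction I would make the reduction to the filter explicit by probing the network with a scalar ($l=0$) input feature on a single-neighbor graph at fixed $\lVert r_{ab}\rVert$; the scalar pass-through path of $f_b \otimes_{W} \Phi$ then returns $\Phi(\hat r)$ up to the fixed path weights, so equivariance of $f_a'$ is equivalent to equivariance of $\Phi$. Writing out equivariance of $\Phi$ and substituting $\tilde Y(R(g)\hat r) = D^{Y}(g)\,\tilde Y(\hat r)$ from \eqref{eq:sphharm} gives the requirement $\tilde\theta \otimes D^{Y}(g)\,\tilde Y(\hat r) = D^{\Phi}(g)\bigl(\tilde\theta \otimes \tilde Y(\hat r)\bigr)$ for all $g \in O(3)$ and all $\hat r$, where $D^{\Phi}$ is the representation carried by the output irreps of the tensor product.

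The key step is to apply the defining equivariance of the tensor product, $D^{\Phi}(g)(\tilde\theta \otimes u) = \bigl(D^{\theta}(g)\tilde\theta\bigr)\otimes\bigl(D^{Y}(g)u\bigr)$, which turns the requirement into $\bigl(\tilde\theta - D^{\theta}(g)\tilde\theta\bigr)\otimes D^{Y}(g)\,\tilde Y(\hat r) = 0$. Since the spherical harmonics are linearly independent functions on $S^2$, their evaluation vectors $\{\tilde Y(\hat r)\}$ span $\bigoplus_{l} V^{l}$, and $D^{Y}(g)$ is invertible, so the identity forces $\bigl(\tilde\theta - D^{\theta}(g)\tilde\theta\bigr)\otimes u = 0$ for every $u$. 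Contracting against a $u$ with nonzero scalar component and reading off the scalar pass-through path then yields $D^{\theta}(g)\tilde\theta = \tilde\theta$ for all $g$, i.e.\ $\tilde\theta$ is an $O(3)$-invariant vector. A nontrivial irrep $(l,p)\neq(0,1)$ has no nonzero invariant vector — for $l\ge 1$ nothing is fixed by all of $SO(3)$, and for $(0,-1)$ nothing is fixed by inversion — so every non-scalar $\theta^{l,p}$ must vanish, which is the claim.

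I expect the main obstacle to be the reduction from equivariance of the full convolution to invariance of $\tilde\theta$ alone, since the tensor products retain only a chosen set of output irreps and could in principle discard the components that pin $\tilde\theta$ down. The way I would handle this is exactly the device above: choose the test feature and output type so that the scalar pass-through paths of both $\tilde\theta \otimes \tilde Y$ and $f_b \otimes_{W}\Phi$ are kept, so that an undistorted copy of $\tilde\theta$ survives to the output. The final representation-theoretic fact, that nontrivial irreps carry no invariant vectors, is standard and I would cite rather than reprove it.
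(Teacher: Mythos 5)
Your proposal is correct and follows essentially the same route as the paper: reduce equivariance of the relaxed convolution to equivariance of the modified filter $\tilde\theta \otimes \tilde Y(\hat r_{ab})$, then argue that this forces $\tilde\theta$ to be $O(3)$-invariant and hence supported on the scalar irrep. The paper asserts both the reduction and the final implication in two lines without justification, so your probing argument and your use of the tensor product's intertwining property simply supply the details the paper omits.
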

\begin{proof}
For the relaxed $E(3)$NN to be equivariant, the following must be true
\begin{align}
D^{\tilde{\theta} \otimes \tilde{Y}}(\tilde{\theta} \otimes \tilde{Y}(\hat{r}_{ab})) = (\tilde{\theta} \otimes \tilde{Y}(R(g)\hat{r}_{ab}))
\end{align}
This can only be true for any $D^{\tilde{\theta} \otimes \tilde{Y}}$ if $\tilde{\theta}$ transforms as a scalar.
\end{proof}
We thus initialize $\tilde{\theta}$ such that only the scalar irrep is non-zero. We emphasize that pseudoscalars, $\theta^{0, -1}$ change sign under inversion in $O(3)$ and are thus non-scalar irreps.
\subsection{Interpreting Relaxed Weights as Spherical Signals}
This definition of relaxed weights also provides interpretability. As spherical harmonics form the basis for functions on a sphere, irreps representing symmetry breaking can be visualized as scalar or pseudoscalar signals illustrating the amount of symmetry or asymmetry learned by the relaxed weights. The scalar signal would be given by 
\begin{align}\label{eq:scalar_sig}
    f(\vec{x}) = \sum_{
\substack{0\le l \le l_{\text{relaxed}} \\ p = \{\text{parity}(Y^0),\dots,\text{parity}(Y^{l_{\text{relaxed}}})\}}} \theta^{l,p} \cdot Y^l(\vec{x})
\end{align}
where we sum over relaxed weights with the same parity as the spherical harmonics. The pseudoscalar signal would be given by summing over relaxed weights with opposite parity to the spherical harmonics.

 
\section{Experiments}
We present experiments showing that the relaxed $E(3)$NN learns the correct symmetry breaking factors. \citet{smidt2021finding} noted that the output of an equivariant neural network will always be higher than or equal symmetry than an input. Thus, without the addition of relaxed weights or a trainable input parameter, one will not be able to deform a shape with higher symmetry into a shape with lower symmetry for example.\footnote{When referring informally to the symmetry of a sample $x$ we are referring to the stabilizer $\text{Stab}(x) = \{g \in G | g \cdot x = x\}$ or the set of symmetry operations that leave the shape fixed.} Model details are in Appendix \ref{app:exp}.
\subsection{Shape Deformations}\label{exp:shape_def}

We consider deformations of 3D shapes, expressed in points of the form $(x,y,z)$. The network takes as input the spherical harmonic projection of the vertices of the initial shape and aims to produce, as output, the spherical harmonic projection of the vertices of the transformed shape, up to some specified $l_{\max}$. We deform a cube to a cube, a cube to a rectangular prism, and a cube to an asymmetric shape, shown in Figure \ref{fig:3d_shapes}. We use a 2-layer relaxed $E(3)$NN with the network $l_{\text{max}} = 2$ and use relaxed weights up to $l_{\text{relaxed}} = 4$ with both even and odd parity in each layer ($\tilde{\theta} = 0_e \oplus 0_o \oplus 1_e \oplus 1_o \oplus 2_e \oplus 2_o \oplus 3_e \oplus 3_o \oplus 4_e \oplus 4_o$). We note that both the choice of the network $l_{\text{max}}$ and $l_{\text{relaxed}}$ is a hyperparameter. $l_{\max}=2$ is a standard choice for the network. We choose a higher $l_{\text{relaxed}}$ to illustrate different learned symmetry breaking factors. To ensure that the learned symmetry breaking includes parity, both even and odd irreps should be included.


We analyze the learned relaxed weights using conventional symmetry analysis to confirm that the model learns the correct amount of symmetry breaking, following the same process as \citet{smidt2021finding}. First consider the case of transforming the cube to the rectangular prism. In 3D, the cube has symmetry group $O_h$, and the symmetry group of the rectangular prism is $D_{4h}$. 
Intuitively, for a cube, the $x,y$ and $z$ axes are symmetrically equivalent. However, in a rectangular prism, the $x$ and $y$ axes are symmetrically equivalent, and the $z$ axis is unique. Thus, the relaxed weights should learn some scalar signal that demonstrates this relationship (i.e. $x^2 = y^2 \neq z^2$). We can calculate the learned scalar signals from the relaxed weights using Equation \ref{eq:scalar_sig}.



The learned relaxed weights for cube to rectangular prism are zero except for $\theta^{2,1}, \theta^{4,1}$, where we use the notation for irreps indexed by $(l,p)$. We can consider the scalar signal learned by $\theta^{2,1}$ using the analytic expression for the spherical harmonics given in the \texttt{e3nn} codebase. The specific non-zero relaxed weights correspond to $m=0$ and $m=2$, so we compute the scalar signal with spherical harmonics $Y_0^{2,1},Y_2^{2,1}$, where the subscript denotes the $m$ index. Computing the scalar signal $\theta_0^{2,1}Y_0^{2,1} + \theta_2^{2,1}Y_2^{2,1}$ yields a function of the form $ax^2+ay^2-2az^2$ for each layer's relaxed weights, demonstrating that the $x$ and $y$ axes are symmetrically equivalent but that the $z$ axis is unique. In fact, the learned scalar signal corresponds exactly to a basis function for irrep $E_g$ of $O_h$ that is not present in $D_{4h}$ (see Appendix \ref{app:char_table}). The same computation holds for the learned $\theta^{4,1}$ relaxed weights. Thus, we find that either $\theta^{2,1}$ or $\theta^{4,1}$ breaks symmetry from the cube to the rectangular prism. Note that one may recover degenerate symmetry breaking factors as in this case, this degeneracy can be reduced by using a smaller $l_\text{relaxed}$ or by adding a loss penalizing higher $l$ weights \cite{smidt2021finding}.
\begin{figure}[htb!]
	\centering
	\includegraphics[width=0.48\textwidth]{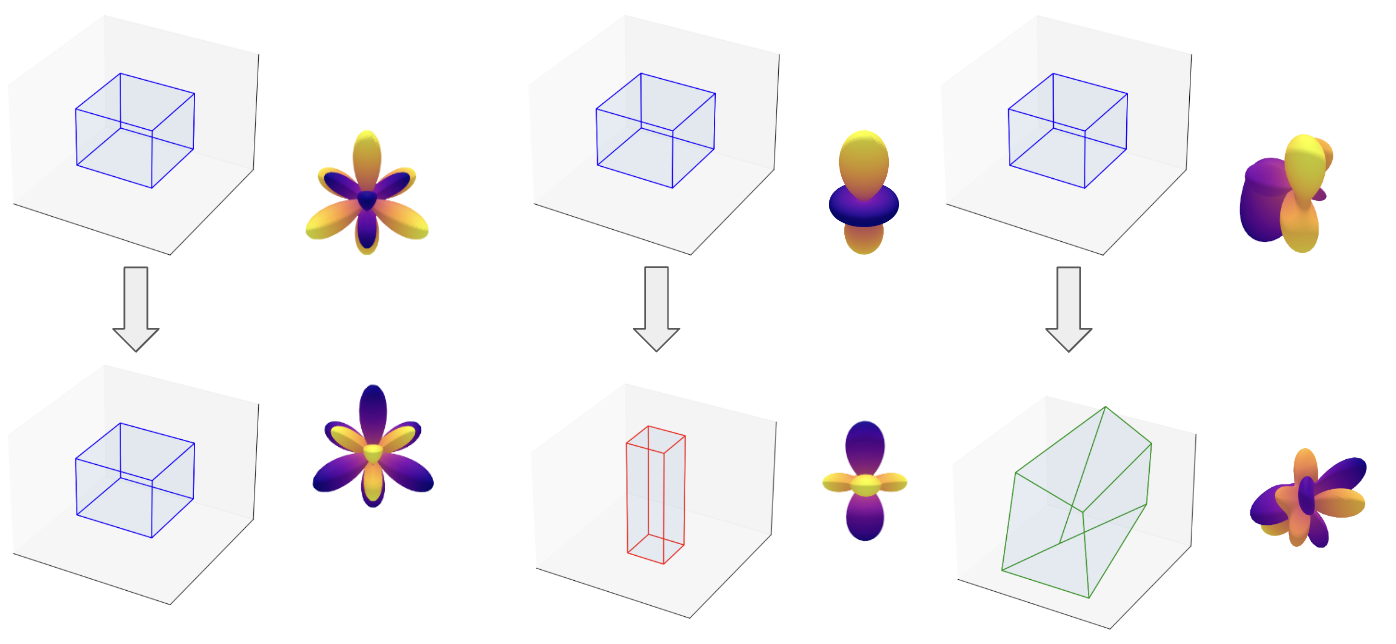}
	\caption{Visualization of tasks and corresponding spherical harmonic projections of the relaxed weights for the first (first row) and second (second row) layers. The spherical harmonic projections are plotted setting the scalar ($0_e$) term to zero for ease of viewing. A 2-layer relaxed $E(3)$NN network is trained to 1) map a cube to a cube, 2) map a cube to a rectangular prism, and 3) map a cube to a less symmetric object.}
	\label{fig:3d_shapes}
\end{figure}
\begin{figure}[htb!]
	\centering
	\includegraphics[width=0.48\textwidth]{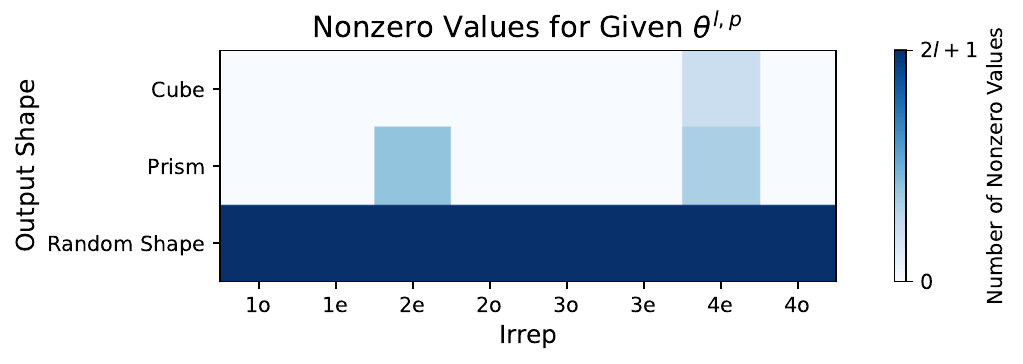}
	\caption{The number of nonzero values for each $\theta^{l,p}$ for the last layer in each network for the task of mapping the cube to a given output shape. Note that each irrep $l$ has dimension $2l+1$.}
	\label{fig:3d_sparsity}
\end{figure}

In the case of mapping the cube to itself, we do not learn any relaxed weights for $l=2$, thus preserving $O_h$ symmetry. When transforming the cube to a random shape, all relaxed weights are non-zero, illustrating the lack of symmetry as seen in Figure \ref{fig:3d_sparsity}. These experiments demonstrate the interpretability of the relaxed weights as spherical harmonic signals and that they illuminate the correct symmetry breaking factors.

\subsection{Charged Particle in an Electromagnetic Field}
To illustrate a physical application of the relaxed $E(3)$NN, we consider the trajectory of a charged particle in an electromagnetic field. The force on a charged particle in an electric field $\vec{E}$ and a magnetic field $\vec{B}$ is given by $\vec{F} = q(\vec{E} + \mathbf{v} \times \vec{B})$. $\vec{E}$ is a vector ($1_o$) and $\vec{B}$ is a pseudovector ($1_e$). Figure \ref{fig:particle_traj} presents an example of a positively charged particle trajectory with random initial position and velocity under an electric field in the $\hat{x}$ direction and a magnetic field in the $\hat{y}$ direction. 
\begin{figure}[htb!]
	\centering
	\includegraphics[width=0.3\textwidth]{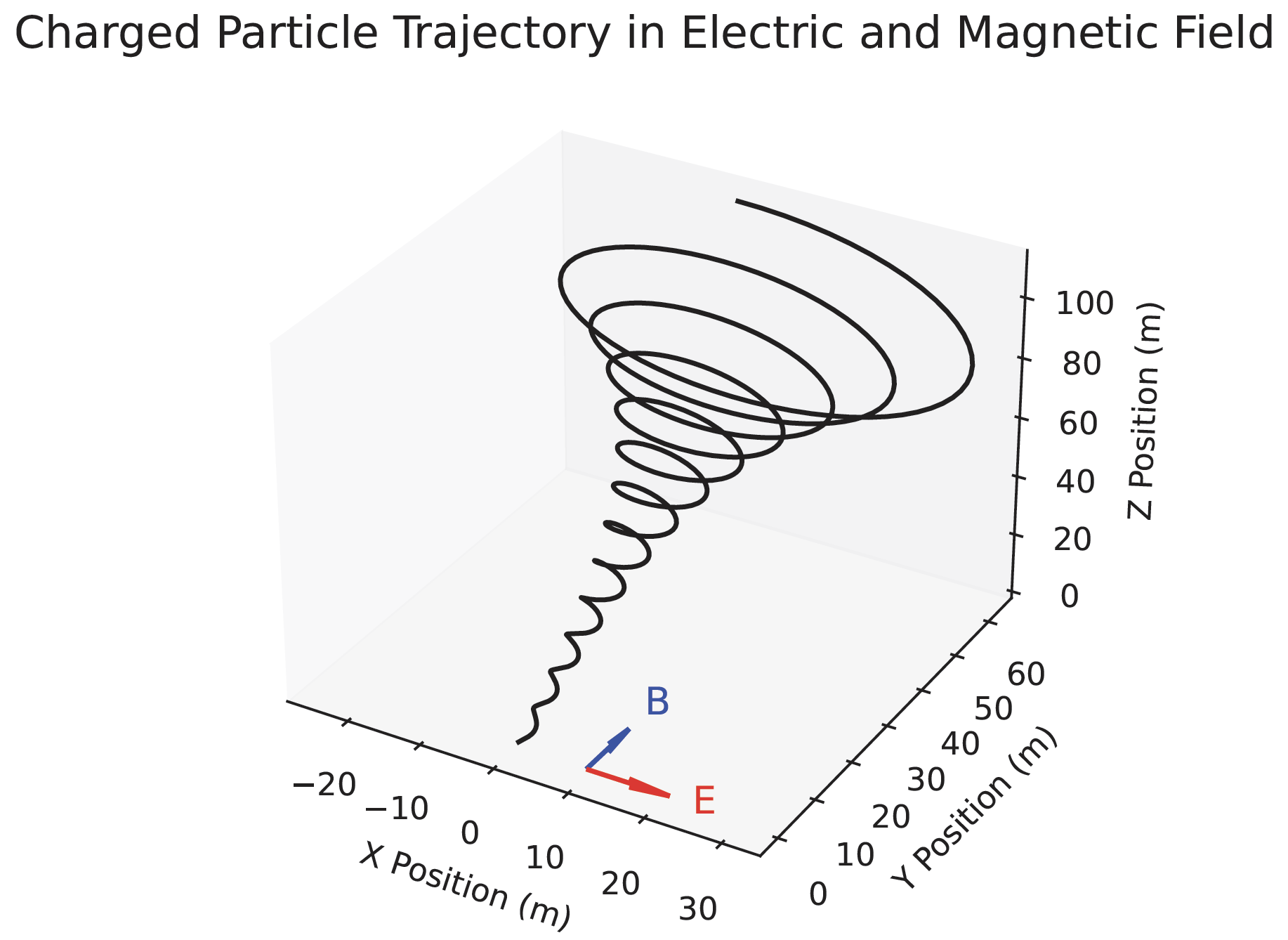}
	\caption{Sample trajectories of particles with random initial velocities and starting positions under an electric field in the $\hat{x}$ direction and a magnetic field in the $\hat{y}$ direction.}
	\label{fig:particle_traj}
\end{figure}

We thus consider if a relaxed $E(3)$NN can discover the relationship between the electric and magnetic field\textemdash learning a $1_o$ vector with the correct magnitude and direction for $\vec{E}$ and learning a $1_e$ pseudovector with the correct magnitude and direction for $\vec{B}$. For simplicity, we consider a single particle trajectory with $\vec{E} = [1,0,0]$ and $\vec{B} = [0,1,0]$ with a randomly initialized position and velocity. The network is adapted from a convolutional network for a graph with node/edge attributes (NetworkForAGraphWithAttributes included in \texttt{e3nn}).

We aim to predict the force on each particle given its velocity and thus include no coordinate information to the network. The input to the network is a single particle with a $1_o \oplus 0_e$ node attribute representing the particle velocity and the scalar charge. Note the even parity scalar charge is included to ensure the network can learn even irreps as well as odd. The desired output is the $1_o$ force acting on the particle at that timestep. We consider a 1-layer relaxed network with relaxed weights $0_e\oplus 0_o \oplus 1_o \oplus 1_e$. The network is then trained on multiple timesteps in the trajectory, predicting the force at each timestep. 

We emphasize that through showing the model multiple timesteps with different velocity vectors, we break symmetry and allow the relaxed weights to learn the form of the electric and magnetic fields. An equivariant neural network with equivalent architecture without relaxed weights cannot accomplish this task, rather the loss remains ``stuck'' in a non-symmetry breaking configuration. 
\begin{table}[]
\centering
\begin{tabular}{l|l|l}
Model & Equiv & Relaxed \\ \hline
Training MSE   & 0.07  & \textbf{1e-6}    \\ 
\end{tabular}
\caption{Training MSE for equivariant model and relaxed model when predicting electromagnetic forces for multiple timesteps in a trajectory. The relaxed model is able to overfit to the training set while the equivariant model remains in a non-symmetry breaking configuration.}
\label{table:mse}
\end{table}
The relaxed $E(3)$NN learns the correct $\vec{E}$ and $\vec{B}$ fields, with error of approximately 0.001-0.01 in the $y$ and $z$ components after normalization as seen in Figure \ref{fig:pred_e_b}. This example illustrates the physical interpretability of the relaxed weights and their potential for application to more complicated physical examples.

\begin{figure}[htb!]
	\centering
	\includegraphics[width=0.45\textwidth]{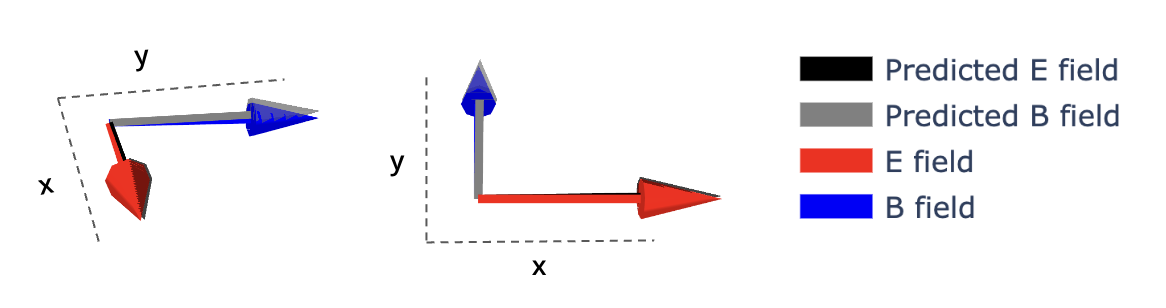}
	\caption{True and predicted fields for the relaxed network.}
	\label{fig:pred_e_b}
\end{figure}

\section{Discussion}
We introduce a formulation for a relaxed $E(3)$ equivariant graph neural network and implement this in the $\texttt{e3nn}$ framework. Using deformations of 3D shapes and analyzing the learned relaxed weights, we find that the relaxed weights learn the correct mathematical form of the symmetry breaking parameters. Given the physical example of a charged particle in an electromagnetic field, the relaxed $E(3)$NN is able to learn the vector and pseudovector form of the electric and magnetic field. We aim to further characterize the mathematical properties of relaxed $E(3)$NNs, apply them to more complicated examples in experimental physics and materials science, and study their optimization dynamics. For reproducibility, we include our code at \url{https://github.com/atomicarchitects/RelaxedE3NN}.

\section{Acknowledgements}
Elyssa Hofgard was supported by the U.S. Department of Energy, Office of Science, Office of Advanced Scientific Computing Research, Department of Energy Computational Science Graduate Fellowship under Award Number DE-SC0024386. Rui Wang and Tess Smidt were supported by DOE ICDI grant DE-SC0022215. Robin Walters was supported by NSF 2134178. This research used resources of the National Energy Research Scientific Computing Center (NERSC) under Award Number ERCAP0028753, a Department of Energy Office of Science User Facility. 

This report was prepared as an account of work sponsored by an agency of the United States Government. Neither the United States Government nor any agency thereof, nor
any of their employees, makes any warranty, express or implied, or assumes any legal liability or responsibility for the accuracy, completeness, or usefulness of any information, apparatus, product, or process disclosed, or represents that its use would not infringe privately owned rights. Reference herein to any specific commercial product, process, or service by trade name, trademark, manufacturer, or otherwise does not necessarily constitute or imply its endorsement, recommendation, or favoring by the United States Government or any agency thereof. The views and opinions of authors expressed herein do not necessarily state or reflect those of the United States Government or any agency thereof.

\pagebreak
\bibliography{references}
\bibliographystyle{icml2024}

\newpage
\appendix
\onecolumn

\section{Symmetry Analysis for Experiment \ref{exp:shape_def}}\label{app:symm_anal}
\subsection{Character Tables}\label{app:char_table}
We present an overview of character tables used for the symmetry analysis in Section \ref{exp:shape_def}. Character tables are used for finite groups (subgroups of $E(3)$) and are used extensively in solid state physics, crystallography, and chemistry. See \citet{Dresselhaus2008, zee2016group} for more information on group and representation theory in physics and materials.

\textbf{Classes and character.} Given a group $G$, two elements $g$ and $g'$ are conjugate if there exists another element $x \in G$ such that $g' = x^{-1}g x$. A class is the set of group elements that can be obtained from a given element $g \in G$ by conjugation. The elements of a given group can thus be divided into classes. Classes correspond to physically distinct kinds of symmetry \cite{Dresselhaus2008} and are thus useful for categorizing different symmetries of a material rather than enumerating all individual group elements. Examples of classes are the twofold axes of rotation of an equilateral triangle or the threefold rotations. These classes can be related to the character of a given representation. Let $V$ be a finite-dimensional vector space and consider $D : G \to \text{GL}(V)$, a representation of $G$ on $V$. The character of $D(g)$ is the trace of the matrix of the representation, $\text{Tr}(D(g))$. One can show that the character for each element in a class is the same \cite{Dresselhaus2008}. As the character for each element in a class is the same, this provides a useful way to connect group representations to sets of physically distinct symmetries.

The relationship between characters of representations and classes for a given group can be summarized in a character table. The left hand column labels the irreps and the top row labels the class. Notation used in character tables can vary, with Schoenflies symmetry notation commonly used to describe molecular symmetries and Hermann-Mauguin notation commonly used in crystallography. For the purposes of this analysis, it is not necessary to be familiar with this notation, but the interested reader may consult Chapter 3 in \citet{Dresselhaus2008}. The entries correspond to the characters of different irreps. On the right-hand side, the columns list basis functions corresponding to irreps \textemdash functions that can be used to generate matrices of given irreps and transform the same way as that irrep. 

\begin{table}
\centering
\resizebox{\columnwidth}{!}{%
\begin{tabular}{|c|c|c|c|c|c|c|c|c|c|c|c|c|c|}
\hline $\mathbf{O}_{\mathbf{h}}$ & $\mathrm{E}$ & $8 \mathrm{C}_3$ & $6 \mathrm{C}_2$ & $6 \mathrm{C}_4$ & $3 \mathrm{C}_2=\left(\mathrm{C}_4\right)^2$ & $\mathrm{i}$ & $6 \mathrm{~S}_4$ & $8 \mathrm{~S}_6$ & $3 \sigma_{\mathrm{h}}$ & $6 \sigma_d$ & \begin{tabular}{l} 
linear functions, \\
rotations
\end{tabular} & \begin{tabular}{l} 
quadratic \\
functions
\end{tabular} & \begin{tabular}{l} 
cubic \\
functions
\end{tabular} \\
\hline$A_{1 g}$ & +1 & +1 & +1 & +1 & +1 & +1 & +1 & +1 & +1 & +1 & - & $x^2+y^2+z^2$ & - \\
\hline$A_{2 g}$ & +1 & +1 & -1 & -1 & +1 & +1 & -1 & +1 & +1 & -1 & - & - & - \\
\hline $\mathrm{E}_{\mathrm{g}}$ & +2 & -1 & 0 & 0 & +2 & +2 & 0 & -1 & +2 & 0 & - & $\left(\textcolor{blue}{\mathbf{2 z^2-x^2-y^2}}, x^2-y^2\right)$ & - \\
\hline $\mathrm{T}_{1 \mathrm{~g}}$ & +3 & 0 & -1 & +1 & -1 & +3 & +1 & 0 & -1 & -1 & $\left(\mathrm{R}_{\mathrm{x}}, \mathrm{R}_{\mathrm{y}}, \mathrm{R}_{\mathrm{z}}\right)$ & - & - \\
\hline $\mathrm{T}_{2 \mathrm{~g}}$ & +3 & 0 & +1 & -1 & -1 & +3 & -1 & 0 & -1 & +1 & - & $(x z, y z, x y)$ & - \\
\hline $\mathrm{A}_{1 \mathrm{u}}$ & +1 & +1 & +1 & +1 & +1 & -1 & -1 & -1 & -1 & -1 & - & - & - \\
\hline $\mathrm{A}_{2 \mathrm{u}}$ & +1 & +1 & -1 & -1 & +1 & -1 & +1 & -1 & -1 & +1 & - & - & $\mathrm{xyz}$ \\
\hline $\mathrm{E}_{\mathrm{u}}$ & +2 & -1 & 0 & 0 & +2 & -2 & 0 & +1 & -2 & 0 & - & - & - \\
\hline $\mathrm{T}_{1 \mathrm{u}}$ & +3 & 0 & -1 & +1 & -1 & -3 & -1 & 0 & +1 & +1 & $(\mathrm{x}, \mathrm{y}, \mathrm{z})$ & - & $\left(x^3, y^3, z^3\right)\left[x\left(z^2+y^2\right), y\left(z^2+x^2\right), z\left(x^2+y^2\right)\right]$ \\
\hline $\mathrm{T}_{2 \mathrm{u}}$ & +3 & 0 & +1 & -1 & -1 & -3 & +1 & 0 & +1 & -1 & - & - & {$\left[x\left(z^2-y^2\right), y\left(z^2-x^2\right), z\left(x^2-y^2\right)\right]$} \\
\hline
\end{tabular}%
}
\caption{Character table for point group $O_h$ \citep{charactertableOh}.}
\label{table:Oh}
\end{table}

\begin{table}
\centering
\resizebox{\columnwidth}{!}{%
\begin{tabular}{|c|c|c|c|c|c|c|c|c|c|c|c|c|c|}
\hline $\mathbf{D}_{4 h}$ & $\mathrm{E}$ & $2 \mathrm{C}_4(\mathrm{z})$ & $\mathrm{C}_2$ & $2 \mathrm{C}_2^{\prime}$ & $2 \mathrm{C}_2$ & $ \mathrm{i}$ & $2 \mathrm{~S}_4$ & $\sigma_{\mathrm{h}}$ & $2 \sigma_{\mathrm{v}}$ & $ 2 \sigma_d$ & \begin{tabular}{l} 
linear functions, \\
rotations
\end{tabular} & \begin{tabular}{l} 
quadratic \\
functions
\end{tabular} & \begin{tabular}{l}
\begin{tabular}{c} 
cubic \\
functions
\end{tabular}
\end{tabular} \\
\hline$A_{1 g}$ & +1 & +1 & +1 & +1 & +1 & +1 & +1 & +1 & +1 & +1 & - & $x^2+y^2, z^2$ & - \\
\hline$A_{2 g}$ & +1 & +1 & +1 & -1 & -1 & +1 & +1 & +1 & -1 & -1 & $\mathrm{R}_{\mathrm{Z}}$ & - & - \\
\hline $\mathrm{B}_{1 \mathrm{~g}}$ & +1 & -1 & +1 & +1 & -1 & +1 & -1 & +1 & +1 & -1 & - & $x^2-y^2$ & - \\
\hline $\mathrm{B}_{2 \mathrm{~g}}$ & +1 & 
-1 & +1 & -1 & +1 & +1 & -1 & +1 & -1 & +1 & - & $x y$ & - \\
\hline $\mathrm{E}_{\mathrm{g}}$ & +2 & 0 & -2 & 0 & 0 & +2 & 0 & -2 & 0 & 0 & $\left(\mathrm{R}_{\mathrm{x}}, \mathrm{R}_{\mathrm{y}}\right)$ & $(x z, y z)$ & - \\
\hline $\mathrm{A}_{1 \mathrm{u}}$ & +1 & +1 & +1 & +1 & +1 & -1 & -1 & -1 & -1 & -1 & - & - & - \\
\hline $\mathrm{A}_{2 \mathrm{u}}$ & +1 & +1 & +1 & -1 & -1 & 
-1 & -1 & -1 & +1 & +1 & $\mathrm{z}$ & - & $z^3, z\left(x^2+y^2\right)$ \\
\hline $\mathrm{B}_{1 \mathrm{u}}$ & +1 & -1 & +1 & +1 & -1 & -1 & +1 & -1 & -1 & +1 & - & - & $\mathrm{xyz}$ \\
\hline $\mathrm{B}_{2 \mathrm{u}}$ & +1 & -1 & +1 & -1 & +1 & 
-1 & +1 & -1 & +1 & -1 & - & - & $z\left(x^2-y^2\right)$ \\
\hline $\mathrm{E}_{\mathrm{u}}$ & +2 & 0 & -2 & 0 & 0 &
-2 & 0 & +2 & 0 & 0 & $(x, y)$ & - & $\left(\mathrm{xz}^2, \mathrm{yz}^2\right)\left(\mathrm{xy}^2, \mathrm{x}^2 \mathrm{y}\right),\left(\mathrm{x}^3, \mathrm{y}^3\right)$ \\
\hline
\end{tabular}%
}
\caption{Character table for point group $D_{4h}$ \citep{charactertableD4h}.}
\label{table:D4h}
\end{table}

Character tables for different groups can then be compared to find which irreps break symmetries between groups. For example, in Section \ref{exp:shape_def}, we consider deforming a cube with symmetry group $O_h$ to a rectangular prism with symmetry group $D_{4h}$. For $O_h$, the character table in Table \ref{table:Oh} shows that irrep $E_g$ has basis functions of $2z^2-x^2-y^2$ (highlighted in \textbf{\textcolor{blue}{blue}}) and $x^2-y^2$. For $D_{4h}$, the basis function of $2z^2-x^2-y^2$ is not present for any of the irreps. However, irrep $B_{1g}$ has a basis function of $x^2-y^2$. This implies that the basis function $2z^2-x^2-y^2$ for irrep $E_g$ can be used to break symmetry between $O_h$ and $D_{4h}$, but that the model should preserve $x^2 = y^2$.

\subsection{Converting Relaxed Weights to Spherical Signal}\label{app:sphharm}
For the shape deformation experiments in Section \ref{exp:shape_def}, we give the analytic forms of the spherical harmonics in $\texttt{e3nn}$. For the cube to rectangular prism, the learned relaxed weights for $\theta^{2,1}$ and $\theta^{4,1}$. For $\theta^{2,1}$, the non-zero relaxed weights are $\theta_0^{2,1}$ and $\theta_2^{2,1}$, corresponding to spherical harmonics
\begin{equation*}
    Y_0^{2,1} = \sqrt{5} \left (y^2 - \frac{1}{2} \left (x^2 + z^2 \right) \right), 
    Y_2^{2,1} = \frac{1}{2} \sqrt{15} (z^2 - x^2)
\end{equation*}
For a given model, we can then compute the scalar signal $f_0^2 = \theta_0^{2,1}Y_0^{2,1} + \theta_2^{2,1}Y_2^{2,1}$ for each layer, as seen in Table \ref{table:calcsig}. Note that the specific numerical values may change across layers/model initializations, yet they yield a scalar function of the form $ax^2+ay^2-2az^2$, demonstrating that the basis function for irrep $E_g$ is used to break symmetry while retaining $x^2 = y^2$ (note multiplying the basis function by -1 does not change the symmetry properties). A similar (but more involved) analysis can be done for relaxed weights $\theta^{4,1}$ to show that they also preserve $D_{4h}$ symmetry and transform as the irrep $E_g$.
\begin{table}[]
\centering
\begin{tabular}{|l|l|l|l|l|l|}
\hline
        & $\theta_0^{2,1}$ & $\theta_2^{2,1}$ & $x^2$ & $y^2$ & $z^2$  \\ \hline
Layer 0 & 0.0031           & -0.0054          & 0.007 & 0.007 & -0.014 \\ \hline
Layer 1 & 0.33             & -0.57            & 0.73  & 0.73  & -1.45  \\ \hline
\end{tabular}
\caption{Relaxed weights and calculated coefficients from the scalar signal for $x^2$, $y^2$, and $z^2$.}
\label{table:calcsig}
\end{table}
\section{Experimental Details}\label{app:exp}
We provide additional details for each experiment.
\subsection{Shape Deformations}
The objective is the MSE loss between the predicted spherical harmonic projections and the spherical harmonic projections of the vertices of the output shape. We use the SGD optimizer with a learning rate of 5e-3. The relaxed weights are also regularized throughout training by the $L_2$ norm with 
\begin{equation}\label{eq:regularize}
    \lambda \sum_{\substack{0\le l \le l_{\text{max}} \\ p \in \{1, -1\}}} ||\theta^{l, p}||_2
\end{equation}
where we set $\lambda = 1e-6$ in order to provide a slight bias towards equivariance. Each model is trained for 2,500 epochs.
\subsection{Particle in Electromagnetic Field}\label{app:exp_particle}
\begin{figure}[htb!]
	\centering
	\includegraphics[width=0.4\textwidth]{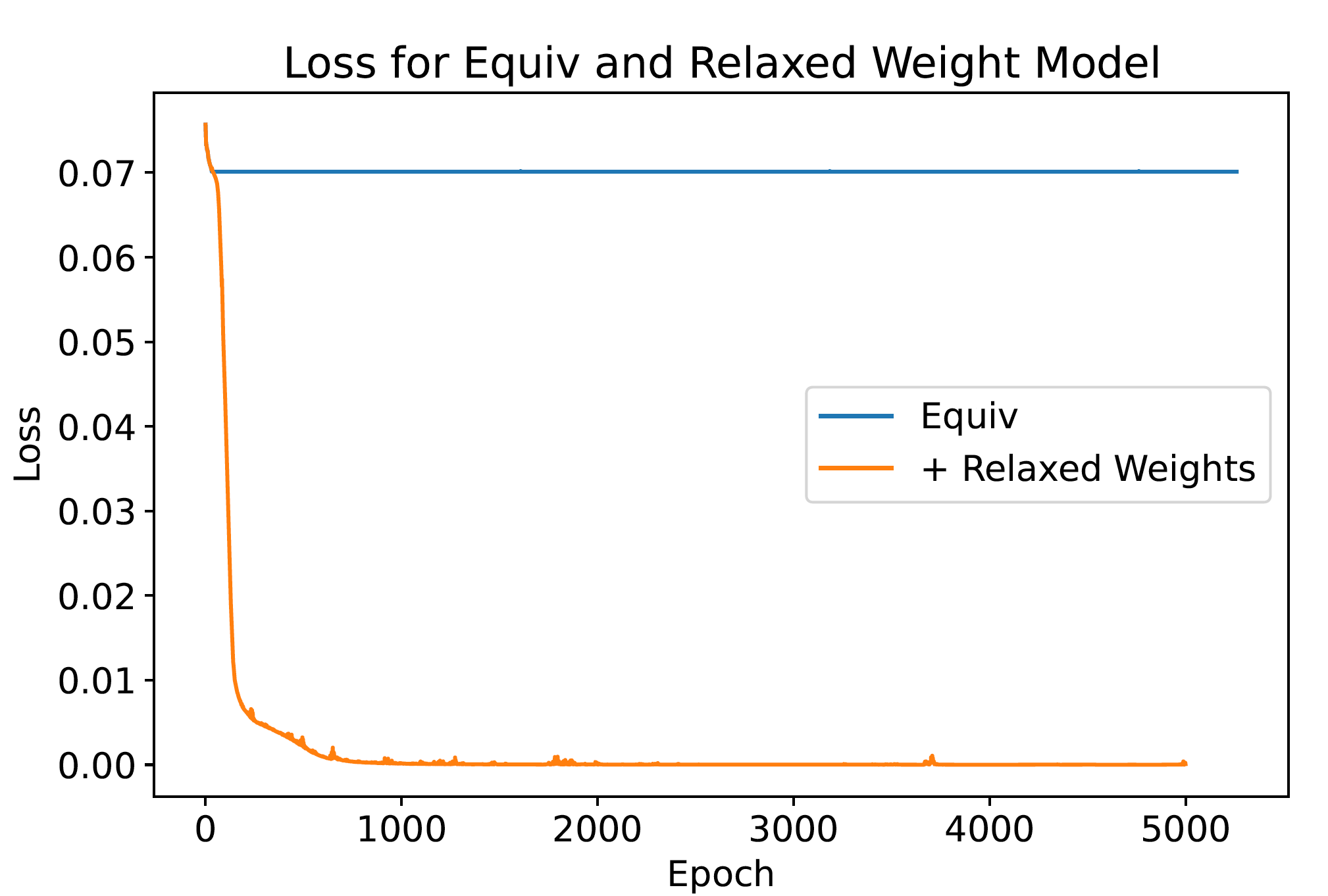}
	\caption{Loss for fully equivariant model and model with relaxed weights trained on the same task.}
	\label{fig:losses}
\end{figure}
We include no positional information to the network, so there are no edge attributes. The convolution over a single particle is still well-defined assuming a self-interaction edge is included in the network (reducing to an equivariant MLP where the irreps interact through the nonlinearities). The node inputs for each timestep are irreps $0_e \oplus 0_o \oplus 1_e \oplus 1_o \oplus 2_e \oplus 2_o$ with only the scalar term non-zero. Including higher order irreps allows for more paths in the tensor product and more model expressivity. 

We consider 500 timesteps from a single trajectory and train with a batch size of 50. We use the Adam optimizer with a learning rate of 1e-3. The model is trained for 5,000 epochs with the MSE loss between the predicted and true forces. The relaxed weights are also regularized by Equation \ref{eq:regularize} where we set $\lambda = 1e-4$. We show that an equivariant model cannot accomplish this task as it requires symmetry breaking. We plan to make our code publically available with further experiments.

\end{document}